\newcommand{\ifff}{ if and only if }
\newcommand{\pc}{\mathbf{P}}
\newcommand{\pp}{\mathbb{P}}
\begin{document}
\setcounter{page}{1}

\title{Combinatorial Aspects of the Distribution of Rough Objects}
\author{\textbf{A. Mani}}
\institute{Department of Pure Mathematics\\
University of Calcutta\\
9/1B, Jatin Bagchi Road\\
Kolkata(Calcutta)-700029, India\\
Email: \email{a.mani.cms@gmail.com}\\
Homepage: \url{http://www.logicamani.in}}

\maketitle

\begin{abstract}
The inverse problem of general rough sets, considered by the present author in some of her earlier papers, in one of its manifestations is essentially the question of when an agent's view about crisp and non crisp objects over a set of objects has a rough evolution. In this research the nature of the problem is examined from number-theoretic and combinatorial perspectives under very few assumptions about the nature of data and some necessary conditions are proved.
\end{abstract}
\begin{keywords}
Rough Objects, Finite Rough Sets, Granular operator Spaces, Anti chains, Granular Rough Semantics, Combinatorial Rough Sets
\end{keywords}

\section{Introduction}

It is well known that sets of rough objects (in various senses) are quasi or partially orderable. Specifically in classical or Pawlak rough sets \cite{ZPB}, the set of roughly equivalent sets has a Quasi Boolean order on it while the set of rough and crisp objects is Boolean ordered.  In the classical semantic domain or classical meta level, associated with general rough sets, the collection of crisp and rough objects is quasi or partially orderable (or more generally a parthood relation is definable). Under minimal assumptions on the nature of these objects, many orders with rough ontology can be  associated - these necessarily have to do with concepts of discernibility. There are situations where the following can happen:
\begin{itemize}
\item {Case-1: The order structure is simple and objects are labeled crisp or otherwise - this can happen for example in communication between agents. }
\item {Case-2: Some of the order structure is lost, but objects are labeled crisp or otherwise - this can happen for example in communication between agents. }
\item {Case-3: The parthood relation among crisp objects is known fully. But the parthood relation on the set of crisp and non crisp objects is not known in full.}
\item {Case-4: All/Some of the order structure is lost, but objects are labeled crisp or otherwise, but the agent had applied some non-rough method of arriving at the information  - this again can happen for example in communication between agents.}
\item {Case-5: All objects are labeled and the parthood relation is known in full.}
\end{itemize}

Under minimal assumptions, it is shown that lot more information about possible models for the situation can be  
deduced. In this order structures relatable to chains and antichains are chosen because of their value in representation of finite posets and certain other possible posets have been excluded. 

The general problem that is investigated has the following form: \emph{given some information about number theoretic properties of crisp and non-crisp objects, then what can be said about the existence of models of possible rough ontology in the context?} 

So the problem falls under the general class of inverse problems (\cite{AM240}), but with additional assumptions of a number theoretic nature. New application contexts can be found in studies in social sciences, psychology and human reasoning - in these contexts finite bounds on possible number of non-crisp objects and their distribution has significant effect (especially the latter). This approach also has the potential to bridge the gap between theoretical rough sets and algorithms used in practice. Two papers that approach the bridging aspect without any number-theoretic assumptions are \cite{NOV,YY15}.

In algorithms used in practice, the goals can be very different - but a dominant method is to \emph{coerce data to follow preconceived models}. Here the idea of \emph{preconceived} is dictated by factors like \emph{ease of computation} at the cost of representation. This is reflected for example in the book \cite{MoPiZi}. Similar methods of intrusion can be found in other branches of machine learning - decision trees can be used for prediction and knowledge representation and in general the methods adopted are to intrusively optimize over the following principles:
\begin{itemize}
\item {Decision trees are comprehensible when they have lesser number of nodes in the tree.}
\item {Decision tree algorithms work faster when the depth of the tree is smaller.}
\item {Accuracy of decisions increase with decrease in number of misclassifications.}
\end{itemize}

In this paper, the motivations are towards knowledge representation. 

\subsection*{Background}

Let $S$ be any set and $l, u$ be lower and upper approximation operators on $\mathcal{S}\subseteq \wp (S)$ that satisfy monotonicity and $(\forall A\subseteq S)\, A\subseteq A^u$. An element $A\in\mathcal{S}$ will be said to be \emph{lower definite} (resp. \emph{upper definite}) if and only if $A^l = A$ (resp. $A^u = A$) and \emph{definite}, when it is both lower and upper definite. 

In general rough sets, these ideas of definiteness are insufficient as it can happen that upper approximations of upper definites are not upper definite. The following variants are of natural interest (though surprisingly their characterizations in the rough context are not fully understood): 
\begin{itemize}
\item {An upper definite object $A$ will be said to be \emph{strongly upper definite} if and only if \[A = A^u = A^{uu}. \] }
\item {An object $A$ will be said to be \emph{pre-strongly upper definite} if and only if \[A^u = A \,\& \, (\exists n) \, A^{u^{n}} = A^{u^{n+1}} .\] }
\item {An object $A$ will be said to be \emph{upper pre-definite} if and only if \[ (\exists n) \, A^{u^{n}} = A^{u^{n+1}} .\] }
\end{itemize}
Analogous concepts of lower definiteness and definiteness can be directly defined .

Possible concepts of rough objects considered in the literature include the following: 
\begin{itemize}
\item {A \textsf{non definite subset} of $S$, that is $A$ is a rough object \ifff $A^l \neq A^u$. }
\item {\textsf{Any pair of definite subsets} of the form $(A , B)$ satisfying $A\subseteq B.$}
\item {\textsf{Any pair of subsets} of the form $(A^l ,A^u)$.}
\item {Sets in an \textsf{interval of the form} $(A^l, A^u)$.}
\item {Sets in an \textsf{interval of the form} $(A, B)$ satisfying $A\subseteq B$ and $A, B$ being definite subsets.}
\item {A \textsf{non-definite element in a RYS}, that is an $x$ satisfying $\neg \pc x^u x^l   $}
\item {An \textsf{interval of the form}, $(A, B)$ satisfying $A\subseteq B$ and $A, B$ being definite subsets.}
\end{itemize}

The idea of definite and rough objects can be varied substantially even when the approximations have been fixed and the above concepts are based on representation. 

\textsf{Concepts of representation of objects necessarily relate to choice of semantic frameworks. In general, in most contexts, the order theoretic representations are of interest. In operator centric approaches, the problem is also about finding ideal representations. The central problem that is pursued in the present paper relates to combinatorial characterization and number theoretic properties of existence of models.}

Granular operator spaces, a set framework with operators introduced by the present author in \cite{AM6999}, will be used as all considerations will require quasi/partial orders in an essential way. The evolution of the operators need not be induced by a cover or a relation (corresponding to cover or relation based systems respectively), but these would be special cases. The generalization to some rough Y-systems \textsf{RYS} (see \cite{AM240} for definitions), will of course be possible as a result. 

\begin{definition}
A \emph{Granular Operator Space} $S$ will be a structure of the form $S\,=\, \left\langle \underline{S}, \mathcal{G}, l , u\right\rangle$ with $\underline{S}$ being a set, $\mathcal{G}$ an \emph{admissible granulation}(defined below) over $S$ and $l, u$ being operators $:\wp(\underline{S})\longmapsto \wp(\underline{S})$ satisfying the following:

\begin{align*}
A^l \subseteq A\,\&\,A^{ll} = A^l \,\&\, A^{u} \subset A^{uu}  \\
(A\subseteq B \longrightarrow A^l \subseteq B^l \,\&\,A^u \subseteq B^u)\\
\emptyset^l = \emptyset \,\&\,\emptyset^u = \emptyset \,\&\,S^{l}\subseteq S \,\&\, S^{u}\subseteq S.
\end{align*}

Here, \textsf{Admissible granulations} are granulations $\mathcal{G}$ that satisfy the following three conditions (Relative \textsf{RYS} \cite{AM240}, $\pc = \subseteq$, $\pp = \subset$) and $t$ is a term operation formed from set operations):

\begin{align*}
(\forall x \exists
y_{1},\ldots y_{r}\in \mathcal{G})\, t(y_{1},\,y_{2}, \ldots \,y_{r})=x^{l} \\
\tag{Weak RA, WRA} \mathrm{and}\: (\forall x)\,(\exists
y_{1},\,\ldots\,y_{r}\in \mathcal{G})\,t(y_{1},\,y_{2}, \ldots \,y_{r}) =
x^{u},\\
\tag{Lower Stability, LS}{(\forall y \in
\mathcal{G})(\forall {x\in \underline{S} })\, ( y\subseteq x\,\longrightarrow\, y \subseteq (x^{l})),}\\
\tag{Full Underlap, FU}{(\forall
x,\,y\in\mathcal{G})(\exists
z\in \underline{S} )\, x\subset z,\,y \subset z\,\&\,z^{l} = z^{u} = z,}
\end{align*}
\end{definition}

On $\wp(\underline{S})$, the relation $\sqsubset$ is defined by \[A \sqsubset B \text{ if and only if } A^l \subseteq B^l \,\&\, A^u \subseteq B^u.\] The rough equality relation on $\wp(\underline{S})$ is defined via $A\approx B \text{ if and only if } A\sqsubset B  \, \&\,B \sqsubset A$. 

Regarding the quotient $\underline{S}|\approx$ as a subset of $\wp(\underline{S})$, the order $\Subset$ will be defined as per \[\alpha \Subset \beta \text{ if and only if } \alpha^l \subseteq \beta^l \,\&\, \alpha^u \subseteq \beta^u.\] Here $\alpha^l$ is being interpreted as the lower approximation of any of the elements of $\alpha$ and so on. $\Subset$ will be referred to as the \emph{basic rough order}.

\begin{definition}
By a \emph{roughly consistent object} will be meant a set of subsets of $\underline{S}$ of the form  $H = \{A ; (\forall B\in H)\,A^l =B^l, A^u = B^u \}$. The set of all roughly consistent objects is partially ordered by the inclusion relation. Relative this maximal roughly consistent objects will be referred to as \emph{rough objects}. By \emph{definite rough objects}, will be meant rough objects of the form $H$ that satisfy 
\[(\forall A \in H) \, A^{ll} = A^l \,\&\, A^{uu} = A^{u}. \] 
\end{definition}

However, this definition of rough objects will not necessarily be followed in this paper.

\begin{proposition}
$\Subset$ is a bounded partial order on $\underline{S}|\approx$. 
\end{proposition}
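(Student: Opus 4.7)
The plan is to verify each of the defining properties in turn, after first dispensing with a well-definedness check. Throughout, I will use representatives: if $\alpha \in \underline{S}|\approx$ and $A \in \alpha$, I will write $\alpha^l$ for $A^l$ and $\alpha^u$ for $A^u$, which is legitimate because $A \approx B$ forces $A^l \subseteq B^l \subseteq A^l$ and $A^u \subseteq B^u \subseteq A^u$, hence equality of the approximations. Consequently the relation $\Subset$ on classes is independent of representatives and coincides with the relation on $\wp(\underline{S})$ induced by $\sqsubset$ modulo $\approx$.

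Reflexivity and transitivity of $\Subset$ are inherited from the set inclusion $\subseteq$ applied componentwise to the pairs $(\alpha^l,\alpha^u)$; there is nothing subtle here. For antisymmetry, suppose $\alpha \Subset \beta$ and $\beta \Subset \alpha$. Chasing the inclusions componentwise gives $\alpha^l = \beta^l$ and $\alpha^u = \beta^u$, so any $A \in \alpha$ and $B \in \beta$ satisfy $A \approx B$. Since $\alpha$ and $\beta$ are maximal roughly consistent sets with the same approximation pair, they must coincide.

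For boundedness, I propose that $[\emptyset]_\approx$ serves as the bottom and $[\underline{S}]_\approx$ as the top. The axioms of a granular operator space give $\emptyset^l = \emptyset^u = \emptyset$, so for every class $\alpha$ trivially $\emptyset \subseteq \alpha^l$ and $\emptyset \subseteq \alpha^u$, yielding $[\emptyset]_\approx \Subset \alpha$. For the top, monotonicity applied to $A \subseteq \underline{S}$ (for any $A$ representing $\alpha$) delivers $A^l \subseteq \underline{S}^l$ and $A^u \subseteq \underline{S}^u$, so $\alpha \Subset [\underline{S}]_\approx$.

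The only step that requires any care, rather than routine bookkeeping, is the well-definedness of $\alpha^l$ and $\alpha^u$ on equivalence classes together with the antisymmetry argument, since both turn on recognising that $A \approx B$ is literally the equality of the two approximation pairs; everything else is a direct transfer of properties of $\subseteq$ through the componentwise definition. I do not foresee a genuine obstacle.
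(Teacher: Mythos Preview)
Your proposal is correct and follows essentially the same route as the paper: reflexivity and transitivity from componentwise set inclusion, antisymmetry from equality of the approximation pairs, and boundedness via the classes of $\emptyset$ and $\underline{S}$ (the paper writes the bounds as $(\emptyset,\emptyset)$ and $(S^l,S^u)$, remarking that the latter need not equal $(S,S)$). Your explicit well-definedness check for $\alpha^l,\alpha^u$ on classes is a detail the paper omits but tacitly relies on, so including it is a genuine improvement rather than a deviation.
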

\begin{proof}
Reflexivity is obvious.  If $\alpha \Subset \beta$ and $\beta \Subset \alpha$, then it follows that $\alpha^l = \beta^l$ and $\alpha^u = \beta^u$ and so antisymmetry holds. 

If $\alpha \Subset \beta$, $\beta \Subset \gamma$, then the transitivity of set inclusion induces transitivity of $\Subset$.
The poset is bounded by $0 = (\emptyset , \emptyset)$ and $1 = (S^l , S^u)$. Note that $1$ need not coincide with $(S, S)$. 
\qed
\end{proof}

In quasi or partially ordered sets, sets of mutually incomparable  elements are called \emph{antichains}. Some of the basic properties may be found in \cite{GG1998,koh}. The possibility of using antichains of rough objects for a possible semantics was mentioned in \cite{AM3690,AM9501} and has been developed subsequently in \cite{AM6999}. In the paper the developed semantics is applicable for a large class of operator based rough sets including specific cases of \textsf{RYS} \cite{AM240} and other less general approaches like \cite{CD3,CC5,YY9,IT2}. In \cite{CD3,CC5}, negation like operators are assumed in general and these are not definable operations relative the order related operations/relation.

\subsection{Concepts of Finite Posets}\label{wth}

Let $S$ be a finite poset with $\#(S) = n < \infty$. The following concepts and notations will be used in this paper:
\begin{itemize}
\item {If $\mathfrak{F}$ is a collection of subsets $\{X_i\}_{i\in J}$ of a set $X$, then a \emph{system of distinct representatives} \textsf{SDR} for $\mathfrak{F}$ is a set $\{x_i ; i\in J\}$ of distinct elements satisfying $(\forall i \in J) x_i \in X_i$.}
\item {For $a, b\in S$, $b$ covers $b$ will be denoted by $a \prec b$. $c(S)$ shall be the number of covering pairs in $S$.}
\item {A chain $C$ will be said to be \emph{saturated} if and only of if $a\prec_{|C} b$ (that is if $b$ covers a in the induced order $\leq_{|C}$ on $C$) implies $a\prec b$  } 
\item {A \emph{chain cover} of a finite Poset $S$ is a collection $\mathcal{C}$ of chains in $S$ satisfying $\cup \mathcal{C} = S$. It is disjoint if the chains in the cover are pairwise disjoint.}
\item {$S$ has finite width $w$ if and only if it can be partitioned into $w$ number of chains, but not less.}
\item {The \emph{Hasse index} of $S$ is defined by $i(S) = \dfrac{c(S)}{\# (S)}$.} 
\item {$S$ is \emph{graded} if there is a unique partition of $S$ into $\{A_i\}_{i=o}^{r}$ with $A_o$ being the set of minimal elements, and \[\forall x\in A_i (x\prec y \longrightarrow y\in A_{i+1}).\] $A_i$ are termed the \emph{levels} of $S$ and if $x\in A_i$, then \emph{rank} of $x$ is $rk(x) = i$.}
\item {A \emph{symmetric chain} $x_o \prec x_1 \prec \ldots x_r$ is a chain with $rk(x_i) = i$ for all $i$. A \emph{symmetric chain decomposition} of $S$ is a partition of $S$ into symmetric chains.}
\end{itemize}

The following theorems are well known:
\begin{theorem}
\begin{itemize}
\item {A collection of subsets $\mathfrak{F}$ of a finite set $S$ with $\#(\mathfrak{F}= r$ has an SDR if and only if for any $1 \leq k \leq r $,the union of any $k$ members of $\mathfrak{F}$ has size at least $k$, that is 
\[(\forall{X_1,\ldots , X_k \in \mathfrak{F}}) \, k \leq \#(\cup X_i).\] }
\item {Every finite Poset $S$ has a disjoint chain cover of width $w = width(S)$.}
\end{itemize}
\end{theorem}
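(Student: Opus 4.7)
For the Hall SDR condition, necessity is immediate: if $\mathfrak{F}$ admits an SDR, then the $k$ distinct representatives of any chosen $X_{i_1},\ldots,X_{i_k}$ lie in their union, forcing its cardinality to be at least $k$. Sufficiency I would prove by induction on $r$. The base $r=1$ follows from $\#(X_1)\geq 1$. For the inductive step I would split into two cases. If every subcollection of size $k<r$ has union of size strictly greater than $k$, then pick any $x_1\in X_1$ and delete $x_1$ from each of $X_2,\ldots ,X_r$; the reduced family of $r-1$ sets still satisfies the Hall condition (with the surplus absorbing the deletion), so by induction it has an SDR which combines with $x_1$ to give an SDR for $\mathfrak{F}$. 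Otherwise some proper subfamily $\mathfrak{F}'$ of size $k<r$ has union $U$ with $\#(U)=k$; apply induction to $\mathfrak{F}'$ to pick representatives inside $U$, and apply induction to the family obtained from $\mathfrak{F}\setminus\mathfrak{F}'$ by deleting $U$ from each set. The only subtle step is verifying the Hall condition for this second reduced family: any $t$ sets from $\mathfrak{F}\setminus\mathfrak{F}'$ together with $\mathfrak{F}'$ form a subfamily of size $k+t$ whose union in $\mathfrak{F}$ has size at least $k+t$, so after removing the $k$ elements of $U$ one is left with at least $t$, as required.

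For the chain-cover statement (Dilworth's theorem), the plan is induction on $n=\#(S)$, with trivial base. Let $w=width(S)$. Call an antichain of size $w$ a \emph{maximum} antichain. I distinguish two cases. First, suppose $S$ has a maximum antichain $A$ that is neither the set of all minimal elements nor the set of all maximal elements of $S$. Set $D^{+}=\{x\in S:\exists a\in A,\,a\leq x\}$ and $D^{-}=\{x\in S:\exists a\in A,\,x\leq a\}$; then $S=D^{+}\cup D^{-}$, $D^{+}\cap D^{-}=A$, and by the case hypothesis both $D^{+}$ and $D^{-}$ are proper subsets of $S$. Each has width exactly $w$ (at most $w$ because any antichain in it is an antichain in $S$; at least $w$ since $A$ sits inside). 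Induction gives disjoint chain covers of $D^{+}$ and $D^{-}$ of size $w$ each. In any minimum chain cover of size $w$, each of the $w$ elements of $A$ must lie in a distinct chain; moreover, in $D^{+}$ that element of $A$ must be the minimum of its chain (otherwise, using that every element of $D^{+}$ dominates some member of $A$, one obtains two comparable elements of $A$) and dually for $D^{-}$. Pairing the $D^{+}$-chains and $D^{-}$-chains through their shared element of $A$ and concatenating yields $w$ disjoint chains covering $S$.

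The remaining case is that every maximum antichain equals either the set of all minimal elements of $S$ or the set of all maximal elements of $S$. Pick any minimal element $m$, extend $\{m\}$ to a maximal chain $C$, and let $M$ be its top (a maximal element of $S$). Then $S\setminus C$ has width at most $w-1$: any antichain of size $w$ in $S\setminus C$ would be a maximum antichain of $S$, and so by the case hypothesis would be the set of all minimal or all maximal elements, contradicting $m\notin S\setminus C$ in the first alternative and $M\notin S\setminus C$ in the second. By induction, $S\setminus C$ has a disjoint chain cover of size at most $w-1$, and adjoining $C$ produces a disjoint chain cover of $S$ into at most $w$ chains; minimality of $w$ as a number of chains is forced by the presence of an antichain of size $w$.

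The main obstacle is precisely this extremal case, where the obvious $D^{+}/D^{-}$ gluing fails because one of the pieces equals $S$; the key trick is that removing the chain $C$ through a compatible minimum–maximum pair is guaranteed to decrease the width, thanks to the strong structural constraint that all maximum antichains are extremal. Everything else in both parts is bookkeeping once the induction hypothesis is available.
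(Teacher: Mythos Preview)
Your proofs of both parts are correct and follow the standard textbook arguments: the classical induction for Hall's marriage theorem with the surplus/tight dichotomy, and (essentially) Galvin's short inductive proof of Dilworth's theorem. The only point worth double-checking is the verification in Case~1 of Dilworth that each $a\in A$ is the minimum of its chain in the $D^{+}$ cover; your one-line justification is fine once one notes that the $w$ elements of $A$ must occupy $w$ distinct chains and hence exhaust them.

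As for comparison with the paper: there is nothing to compare. The paper states this theorem under the heading ``The following theorems are well known'' and offers no proof whatsoever; it is simply quoting Hall's theorem and Dilworth's theorem as background from the references (e.g.\ \cite{GG1998,koh}). So you have supplied strictly more than the paper does. If the intent is to match the paper's treatment, a bare citation suffices; if the intent is to make the exposition self-contained, your argument is perfectly adequate.
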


\section{Semantic Framework}\label{sf}

Since \textsf{objects} are assumed to be definable if at all by sets of \textsf{attributes}, all considerations can be in terms of attributes.  For the considerations of the following sections on distribution of rough objects and on counting to be valid, a minimal set of assumptions are necessary. These are as follows:

\begin{align*}
\tag{FO1} S \text{ is a granular operator space}. \\
\tag{FO2} \mathbb{S} \subseteq \wp(\underline{S}) .\\
\tag{FO3} \# (\mathbb{S}) = n < \infty .\\
\tag{RO1} R \subset \mathbb{S} \text{- the set of rough objects in some sense}.\\
\tag{RO2} \# (R) = n-k < n .\\
\tag{CO1} C \subseteq \mathbb{S} \text{ is the set of crisp objects}.\\
\tag{CO2} \# (C) = k .\\
\tag{RC1} R\cap C = \emptyset .\\
\tag{RO3} \text{ there exists a map } \varphi : R \longmapsto C^{2} .\\
\tag{RC2} (\forall x\in R)(\exists a, b\in C) \varphi (x) = (a, b) \,\& \, a\subset b.
\end{align*}

Note that no further assumptions are made about the nature of $\varphi(x)$. It is not required that 
\[\varphi (x) = (a, b) \, \&\, x^l = a \, \&\, x^u = b,\]
though this happens often.

The set of crisp objects is necessarily partially ordered. In specific cases, this order may be a lattice, distributive, relatively complemented or Boolean order. Naturally the combinatorial features associated with granular operator space depends on the nature of the partial order and results in situations that is way more involved than the situation encoded by the following simple proposition.

\begin{proposition}
Given a fixed value of $\# (\mathbb{S}) = n = \# ((\wp (S))$ and $\#(C)=k$, $R$ must be representable by a finite subset $K \subseteq C^2 \setminus C$.   
\end{proposition}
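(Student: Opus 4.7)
The plan is to read the required representation directly off the data supplied by the semantic framework of Section~\ref{sf}. First I would invoke (RO3) to obtain the map $\varphi : R \longmapsto C^{2}$, and then apply (RC2) to pin down the image: every $\varphi(x)$ has the form $(a,b)$ with $a,b\in C$ and $a\subset b$ strictly. The strictness of the inclusion is the only technical ingredient actually needed.

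Next, I would identify $C$ with its diagonal image $\{(c,c) : c\in C\}\subseteq C^{2}$, under which the shorthand $C^{2}\setminus C$ becomes meaningful in the present context. Because $a\subset b$ excludes $a=b$, no $\varphi(x)$ lies on the diagonal, so $\varphi(R)\subseteq C^{2}\setminus C$. Setting $K:=\varphi(R)$ yields the required subset; finiteness is immediate from $\#(R)=n-k<\infty$, and as a free byproduct one gets the quantitative bound $\#(K)\leq \min\{n-k,\; k^{2}-k\}$, which may be worth recording since the hypothesis $n=\#(\wp(\underline{S}))$ relates $n$ and $\#(\underline{S})$ but not directly $n$ and $k$.

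The only real obstacle is interpretational rather than mathematical, since nothing in the hypothesis requires nontrivial proof. One must adopt the convention that $C^{2}\setminus C$ refers to off-diagonal pairs, and one must agree that ``representable by $K$'' means the existence of the induced surjection $R \longrightarrow K$ via $\varphi$ rather than a bijection, because the paper explicitly declines to assume anything further about $\varphi$ (in particular injectivity is not guaranteed, since two rough objects could a priori be assigned the same bounding pair). I would record this caveat at the end of the proof so the role of the proposition inside the rest of the paper—namely, reducing the combinatorial problem about $R$ to one about subsets of a set of size $k^{2}-k$—is transparent.
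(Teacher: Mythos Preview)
Your proposal is correct and matches the paper's intent. In fact the paper offers no proof at all for this proposition: it is introduced as a ``simple proposition'' and the text moves immediately on to discussing the two extreme orderings of $C$. Your argument---reading the map $\varphi$ off (RO3), using (RC2) to force the image off the diagonal, and setting $K=\varphi(R)$---is exactly the unpacking of the framework axioms that the author leaves implicit, and your interpretational remarks about $C^{2}\setminus C$ meaning off-diagonal pairs and ``representable'' meaning a surjection onto $K$ rather than a bijection are the right clarifications to make.
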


The two most extreme cases of the ordering of the set $C$ of crisp objects correspond to $C$ forming a chain and $C\setminus \{0,1 \}$ forming an anti-chain. 
\begin{definition}
For $a, b \in R$, let 
\[\nu(a, b ) =\,\left\{
\begin{array}{ll}
0, & \text{ if } a^l = b^l   \, \& \, a^u = b^u  \\
1 & \text{ if } a^l \neq b^l \, \& \, a^u \neq b^u \\
\frac{1}{\pi} & \text{ if } a^l \neq b^l   \, \& \, a^u = b^u \\
\frac{1}{e} & \text{ if } a^l = b^l   \, \& \, a^u \neq b^u.
\end{array}
\right. \]

By the \emph{rough distribution index} of $R$ will be meant the sum
\[\iota(R,C) = \sum_{a, b\in R} \nu(a,b)\]
and the \emph{relative rough distribution index} of $R$ shall be
\[\iota^\ast (R, C) = \dfrac{\iota(R,C)}{(n-k)^2}\]
\end{definition}

\begin{theorem}
\[0 \leq \iota (R, C) \leq (n -k)^2\]
\end{theorem}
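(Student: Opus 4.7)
The plan is to bound $\nu(a,b)$ pointwise and then sum. The lower bound is immediate because $\nu$ takes values in $\{0, 1, 1/\pi, 1/e\}$, all of which are nonnegative; hence $\iota(R,C)$, being a sum of nonnegative terms, is itself nonnegative.

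For the upper bound, first I would observe that every value in the codomain of $\nu$ is at most $1$, since $0 < 1/e < 1/\pi < 1$. The indexing set of the defining sum is the Cartesian product $R \times R$, which has cardinality $\#(R)^2 = (n-k)^2$ by assumption (RO2). Summing the pointwise bound $\nu(a,b)\leq 1$ over these $(n-k)^2$ ordered pairs yields
\[
\iota(R,C) \;=\; \sum_{a,b\in R} \nu(a,b) \;\leq\; \sum_{a,b\in R} 1 \;=\; (n-k)^2,
\]
as required.

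There is no genuine obstacle; the only point to watch is the convention for the index of summation. The sum is taken over all ordered pairs $(a,b) \in R\times R$, including the diagonal. On the diagonal one has $a^l = a^l$ and $a^u = a^u$, so $\nu(a,a)=0$, meaning these $n-k$ terms do not contribute. This does not affect either bound but is worth flagging, and it indicates that the upper bound $(n-k)^2$ is not actually attained: the sharp upper bound is $(n-k)(n-k-1)$, achieved precisely when every pair of distinct elements of $R$ falls in the case $a^l\neq b^l$ and $a^u\neq b^u$. I would state the theorem as written and append a brief remark to this effect.
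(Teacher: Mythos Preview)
Your proof is correct and follows the same pointwise-bound-then-sum idea that the paper's (very terse) proof implicitly relies on; in fact the paper only remarks on when the lower bound $0$ is attained (namely when all non-crisp objects are mutually roughly equal) and says nothing explicit about the upper bound, so your argument is actually more complete. One trivial slip: you write $0 < 1/e < 1/\pi < 1$, but since $e < \pi$ the inequality goes the other way, $1/\pi < 1/e$; this does not affect the argument, as both values lie in $(0,1)$. Your closing remark that the diagonal contributes $0$ and hence the sharp upper bound is $(n-k)(n-k-1)$ is a genuine improvement over what the paper states.
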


\begin{proof}
The lower bounds have been obtained on the assumption that the non crisp elements are mutually roughly equal.

\qed
\end{proof}

The measure gives an idea of the extent of distribution of non crisp objects over the distribution of the crisp objects and the relative measure is a bad approximation of the idea of seeking comparison across distributions of crisp objects.

\section{Example Contexts}

Often in the design, implementation and analysis of surveys (in the social sciences in particular), a number of intrusive assumptions on the sample are done and preconceived ideas about the population may influence survey design. Some assumptions that ensure that the sample is representative are obviously good, but as statistical methods are often abused \cite{MHRL} a minimal approach can help in preventing errors. 

The idea of samples being representative translates into number of non crisp objects being at least above a certain number and below a certain number. 

There are also situations (as when prior information is not available or ideas of representative samples are unclear) when such bounds may not be definable or of limited interest.

\section{Distribution of Objects in Chains: Case-0}

This a variant of the simplest case relative computations and requires three additional assumptions:
\begin{enumerate}
 \item {$C$ forms a chain under inclusion order.}
 \item {$\varphi$ is a bijection.}
 \item {Pairs of the form $(x, x)$, with $x$ being a crisp object, also correspond to rough objects.}
\end{enumerate}

It should be noted that this interpretation is not compatible with the interval way of representing rough objects without additional tweaking. It is the pairs interpretation that is being targeted.

\begin{theorem}
Under the above two assumptions, the number of crisp objects is related to the total number of objects by the formula: \[k \stackrel{i}{=} \dfrac{(1+4n)^\frac{1}{2} -1 }{2}.\] 
In the formula $\stackrel{i}{=}$ is to be read as \emph{if the right hand side (RHS) is an integer then the left hand side is the same as RHS.}  
\end{theorem}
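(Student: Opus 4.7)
The plan is to convert the three Case-0 assumptions into a cardinality count for $R$, producing a quadratic relation between $k$ and $n$, then invert it. The main content is bookkeeping; the only conceptual choice is the size of the codomain of $\varphi$.

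First I would use assumption~(2) and assumption~(3) to pin down $\#(R)$. Since $C$ is a chain under inclusion, any two of its elements are comparable, so RC2's requirement $a \subset b$ imposes no restriction on \emph{which} unordered pairs of crisp objects can arise, only on the ordering of the coordinates. Combined with assumption~(3), which lifts the strictness and admits the diagonal $\{(x,x) : x \in C\}$, this says that every ordered pair in $C \times C$ is admissible as a value of $\varphi$. Because $\varphi : R \to C^2$ is asserted to be a bijection, we conclude
\[
\#(R) \;=\; \#(C^2) \;=\; k^2.
\]

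Next I would combine this with RC1 ($R \cap C = \emptyset$) and $\mathbb{S} = C \cup R$ to obtain
\[
n \;=\; \#(C) + \#(R) \;=\; k + k^2,
\]
which rearranges to the quadratic $k^2 + k - n = 0$. Applying the quadratic formula and selecting the positive root (since $k \geq 0$) gives
\[
k \;=\; \frac{-1 + \sqrt{1+4n}}{2} \;=\; \frac{(1+4n)^{1/2} - 1}{2},
\]
which is the stated expression.

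Finally, I would justify the integer-matching convention $\stackrel{i}{=}$: the RHS is an integer iff $1+4n$ is an odd perfect square $(2m+1)^2$, equivalently iff $n = m(m+1)$ for some nonnegative integer $m$; and in that case the RHS equals $m$, which coincides with the $k$ produced by the counting argument above. So whenever the formula's RHS is an integer, it reports the correct $k$. The only real step that could go wrong is the identification $\#(R) = k^2$ in the first paragraph; one might be tempted to count only the $\subseteq$-comparable ordered pairs and arrive at $\frac{k(k+1)}{2}$, which would yield a different quadratic. The resolution is that, in a chain, assumption~(3) together with bijectivity of $\varphi$ forces the full Cartesian square $C \times C$ to be the image, not merely the order filter of comparable pairs.
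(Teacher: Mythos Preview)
Your argument is correct and matches the paper's own proof: both derive $n-k=k^{2}$ from the bijectivity of $\varphi$ onto $C^{2}$ (using assumption~(3) to include the diagonal) and then solve the resulting quadratic for $k$. Your additional justification of why the image is all of $C\times C$, and your remark on the $\stackrel{i}{=}$ convention, are sound elaborations of exactly the same route.
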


\begin{proof}
\begin{itemize}
\item {Clearly the number of rough objects is $n - k$ .}
\item {By the nature of the surjection $n - k$ maps to $k^2$  pairs of crisp objects.}
\item {So $n - k = k^2$.}
\item {So integral values of $\dfrac{(1+4n)^\frac{1}{2} -1 }{2}$ will work.}  
\end{itemize}
\qed
\end{proof}

This result is associated with the distribution of odd square integers of the form $4n +1$ which in turn should necessarily be of the form $4(p^2 +p) +1$ (p being any integer). The requirement that these be perfect squares causes the distribution of crisp objects to be very sparse with increasing values of $n$.  The number of rough objects between two successive crisp objects increases in a linear way, but this is a misleading aspect. These are illustrated in the graphs below (\textsf{Fig.1,Fig.2}).

\begin{figure*}[htb]
\centering
\includegraphics[width=11.7cm]{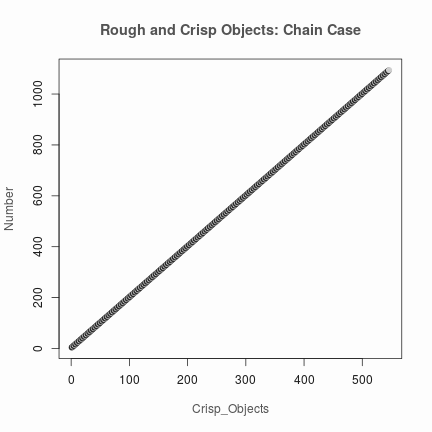}
\caption{Rough Objects Between Crisp Objects: Special Chain Case}
\end{figure*}

\begin{figure*}[htb]
\centering
\includegraphics[width=11.7cm]{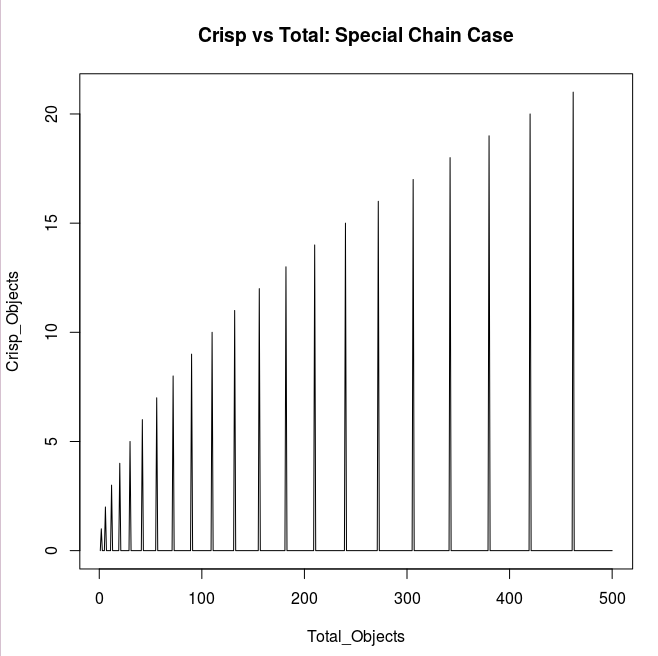}
\caption{Values of n and k: Special Chain Case}
\end{figure*}
\section{Distribution of Objects Over Chains: Case-1}

The simplest case relative computations requires two additional assumptions:
\begin{enumerate}
 \item {$C$ forms a chain under inclusion order.}
 \item {$\varphi$ is an bijection onto $C^2 \setminus \Delta_C$ ($\Delta_C$ being the diagonal of $C$).}
\end{enumerate}

\begin{theorem}
Under the above two assumptions, the number of crisp objects is related to the total number of objects by the formula: \[n - k = k^2 - k\] 
So, it is necessary that $n$ be a perfect square
\end{theorem}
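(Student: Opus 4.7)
The plan is a direct counting argument, exactly parallel in structure to the proof of the Case-0 theorem. The whole content of the statement is the equality $n - k = k^2 - k$ together with the observation that this forces $n = k^2$, so the proof reduces to computing $\#(R)$ in two different ways using the bijection $\varphi$.

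First I would read off $\#(R) = n - k$ directly from RO2 and CO2, invoking RC1 to conclude that $R$ and $C$ are disjoint and hence $\#(R) + \#(C) = n$. Next I would compute the cardinality of the codomain of $\varphi$: since $\#(C) = k$, we have $\#(C^2) = k^2$, and since $\#(\Delta_C) = k$, it follows that $\#(C^2 \setminus \Delta_C) = k^2 - k$. Because $\varphi$ is a bijection from $R$ onto $C^2 \setminus \Delta_C$, these two counts coincide, yielding $n - k = k^2 - k$ and hence $n = k^2$. Since $k$ is a nonnegative integer, $n$ must be a perfect square, which is the second assertion.

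There is no genuine obstacle: once the bijection $\varphi$ onto $C^2 \setminus \Delta_C$ is granted, the proof is a one-line cardinality computation. The only point worth a brief sanity check is that the hypothesis ``$C$ is a chain under inclusion'' is consistent with $\varphi$ hitting every off-diagonal pair, rather than only those $(a,b)$ with $a \subset b$ in the sense of RC2. Parallel to the remark the author places before the Case-0 theorem, I would note that this statement implicitly commits to the \emph{pairs} interpretation of rough objects (so both orientations of a comparable pair appear in the image of $\varphi$) rather than the interval interpretation; once that reading is fixed, the equation $n - k = k^2 - k$ and the perfect-square conclusion follow at once.
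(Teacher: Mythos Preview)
Your argument is correct and mirrors the paper's proof exactly: both compute $\#(R)=n-k$, compute $\#(C^{2}\setminus\Delta_C)=k^{2}-k$, and then equate the two via the bijection $\varphi$ to obtain $n=k^{2}$. Your added remarks on the axioms invoked and on the pairs-versus-interval reading are sound elaborations but do not change the route.
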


\begin{proof}
\begin{itemize}
\item {Clearly the number of rough objects is $n - k$ .}
\item {By the nature of the surjection $n - k$ maps to $k^2 -k$  pairs of crisp objects (as the diagonal cannot represent rough objects).}
\item {So $n - k = k^2 - k$.}
\item {So $n= k^2$ is necessary.}  
\end{itemize}
\qed
\end{proof}

\begin{theorem}
In the above context, the cardinality of Boolean algebras that are power sets and in which the rough objects form chains in the induced order correspond to integral solutions for $x$ in  \[ 2^x = k^2 .\]  
\end{theorem}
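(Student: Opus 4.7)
The plan is to combine the cardinality constraint from the preceding theorem with the defining cardinality of a finite power-set Boolean algebra. Under the Case-1 hypotheses, the preceding theorem gives $n = k^2$; if in addition $\mathbb{S}$ carries the structure of a Boolean algebra that is the power set of some underlying set of cardinality $x$, then $n = 2^x$. Equating these two expressions for $n$ yields the equation in the statement.

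First I would simply invoke the previous theorem to obtain $n - k = k^2 - k$, hence $n = k^2$. No new combinatorial work is needed here, as the Case-1 setup (chain structure on $C$ and $\varphi$ a bijection onto $C^2 \setminus \Delta_C$) is assumed to persist.

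Second, I would use the hypothesis that $\mathbb{S}$ is the power set of a finite set of cardinality $x$, so that $n = \#(\mathbb{S}) = 2^x$. Substituting into $n = k^2$ gives $2^x = k^2$, so the admissible cardinalities of such Boolean algebras are exactly the $2^x$ for which this equation has an integer solution; equivalently $k = 2^{x/2}$ must be an integer, which forces $x$ to be even. One can then record the admissible values $x \in \{0,2,4,\ldots\}$ with corresponding $k \in \{1,2,4,\ldots\}$.

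The main obstacle, though a minor one, is the consistency check that the extra clause about rough objects forming chains under the induced $\Subset$ order is actually realizable when $\mathbb{S}$ is taken to be a finite power set with $C$ chosen as a chain of the prescribed size $k = 2^{x/2}$. This reduces to picking a maximal chain inside a finite distributive lattice and verifying that the pair-interpretation of rough objects inherited from $\varphi : R \to C^2 \setminus \Delta_C$ is totally ordered by $\Subset$; because the ambient lattice is distributive, this is an explicit existence statement rather than a genuinely hard combinatorial argument, and the equation $2^x = k^2$ is itself the only substantive content of the theorem.
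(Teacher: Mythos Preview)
Your proposal is correct and follows essentially the same route as the paper: invoke the preceding result to get $n=k^2$, use that a finite power set has cardinality $2^x$, and equate. The paper's own proof is in fact terser than yours---it simply records $2^x = k^2$ and the equivalent form $x = 2\log_2 k$, without your additional consistency check on the chain condition for rough objects.
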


\begin{proof}
As the number of elements in a finite power set must be of the form $2^x$ for some positive integer $x$, the correspondence follows. If $2^x = k^2$, then $x = {2 \log_2 k}$

This translates to a very sparse distribution of such  models. In fact for $n\leq 10^8$, the total number of models is $27$.

\qed
\end{proof}

\textsf{Fig. 3} gives an idea of the numbers that work:

\begin{figure*}[hbt]
\centering
\includegraphics[width=11.7cm]{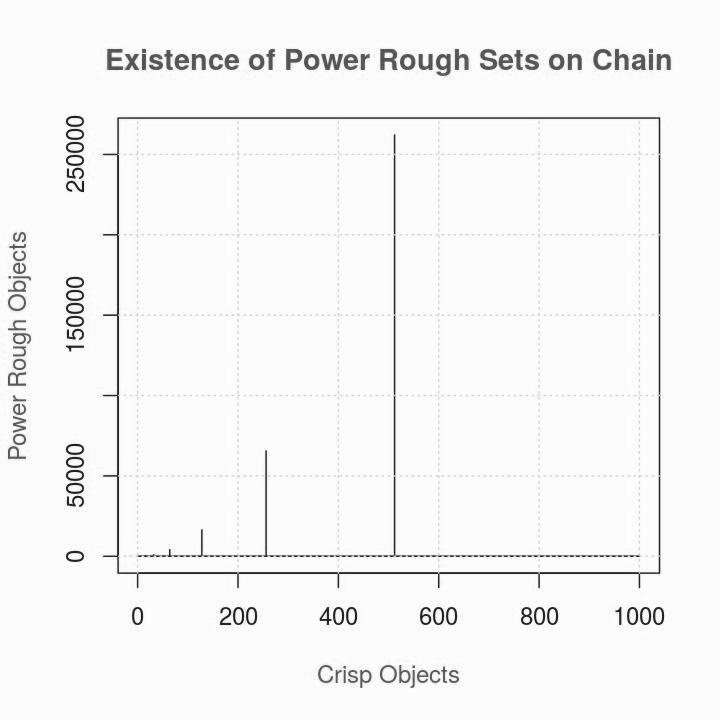}
\caption{Existence of Power Rough Sets on Chain}
\end{figure*}

\section{Distribution of Objects Over Chains: Case-2}

The formalism of this case is motivated by the hope that existence of rough objects may be strongly regulated by essentially number-theoretic properties.  

In the context of case-1, if the condition of $\varphi$ being a bijection into $C^2 \setminus \Delta_C$ is relaxed to $\varphi $ being an injection and it is assumed that \[\# (\varphi(R)) \leq \alpha (k^2 - k),\] for some rational $\alpha\in (0, 1]$ (the interpretation of $\alpha$ being that of a loose upper bound rather than an exact one), then the following theorems holds:

\begin{theorem}\label{fract}
Given fixed $n$, the possible values of $k$ correspond to integral solutions of the formula:
\[k = \dfrac{(\pi - 1) + \sqrt{(1-\pi)^2 +4n\pi}}{2\pi},\] subject to $k\leq \lfloor\sqrt{n}\rfloor$, $\# (\varphi(R)) = \pi (k^2 - k)$ and $0 < \pi \leq \alpha$.
\end{theorem}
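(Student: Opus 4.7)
The plan is a direct algebraic derivation. Since $\varphi$ is now only assumed to be an injection (not a bijection onto $C^2\setminus\Delta_C$), one has $\#(\varphi(R))=\#(R)=n-k$. Combining this with the stipulated identity $\#(\varphi(R))=\pi(k^2-k)$ gives the scalar equation
\[n - k = \pi(k^2 - k),\]
which rearranges to the quadratic
\[\pi k^2 + (1-\pi)k - n = 0\]
in the unknown $k$.

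I would then apply the standard quadratic formula to obtain
\[k = \frac{-(1-\pi)\pm\sqrt{(1-\pi)^2+4\pi n}}{2\pi}.\]
The discriminant is strictly positive for $\pi,n>0$, so both roots are real; the root with the minus sign is negative (since $\sqrt{(1-\pi)^2+4\pi n}>|1-\pi|$ whenever $n>0$) and must be discarded, as $k$ counts crisp objects. The plus branch gives precisely the closed form claimed in the theorem.

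The remaining work is a feasibility check on the side conditions. The range $0<\pi\leq\alpha$ is inherited directly from the standing hypothesis $\#(\varphi(R))\leq\alpha(k^2-k)$; the bound relating $k$ to $\lfloor\sqrt{n}\rfloor$ comes from the trivial injectivity requirement $n-k\leq k^2-k$, i.e.\ $n\leq k^2$, which restricts the integer $k$ compatible with a given $n$. Concretely, one sets $\pi=(n-k)/(k^2-k)$ and verifies membership in $(0,\alpha]$; the admissible pairs $(k,\pi)$ are exactly those where the closed form evaluates to an integer within the stated range.

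The derivation itself is essentially routine quadratic algebra, so I do not expect a substantive obstacle. The only point calling for care is the bookkeeping around the auxiliary parameter $\pi$: making sure that the free parameter in the hypothesis is treated as a derived quantity $\pi=(n-k)/(k^2-k)$ when one wants to enumerate the integer solutions $k$ for fixed $n$, and conversely that the stated formula is read as a parametric family in $\pi$ when one fixes $\pi$ and asks for which $n$ an integer $k$ exists.
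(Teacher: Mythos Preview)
Your proposal is correct and follows essentially the same route as the paper: set up $n-k=\pi(k^{2}-k)$ from injectivity, rewrite as the quadratic $\pi k^{2}+(1-\pi)k-n=0$, apply the quadratic formula, and discard the negative root. One minor slip in your side-condition discussion: from $n\le k^{2}$ you obtain $k\ge\sqrt{n}$, not $k\le\lfloor\sqrt{n}\rfloor$; the paper's own handling of this bound is equally loose (its proof records $k\in\{2,\dots,\lfloor\sqrt{n/\alpha}\rfloor\}$ and concedes the bound is ``not the best possible''), so this is more an artifact of the stated inequality than a flaw in your derivation.
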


\begin{proof}
 \begin{itemize}
 \item {When $n-k = \pi (k^2- k )$ then $\pi = \dfrac{(n - k)}{(k^2 - k)}$}
 \item {So positive integral solutions of $k = \dfrac{(\pi - 1) + \sqrt{(1-\pi)^2 +4n\pi}}{2\pi}$ may be admissible.}
 \item {The expression for $\alpha$ means that it can only take a finite set of values given $n$ as possible values of $k$ must be in the set $\{2, 3, \ldots , \lfloor\sqrt {\frac{n}{\alpha}}\rfloor\}$. The bounds for $k$ is not the best possible.}
 \end{itemize}
\qed
\end{proof}

\begin{theorem}
In the proof of the above theorem (Thm. \ref{fract}), fixed values of $n$ and $\pi$ do not in general correspond to unique values of $k$ and unique models. 
\end{theorem}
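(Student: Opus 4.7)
The plan is to show that the correspondence $(n,\pi)\mapsto (k,\text{model})$ encoded in Thm.~\ref{fract} is under-determined on the model side. In Case-2 a model is specified by the chain $(C,\subseteq)$ of size $k$ together with the image of the injection $\varphi:R\longrightarrow C^{2}\setminus\Delta_{C}$, which must have size $\pi(k^{2}-k)$. Thm.~\ref{fract} fixes only these cardinalities and says nothing about which subset of $C^{2}\setminus\Delta_{C}$ is selected; hence for $0<\pi<1$ and $k\geq 3$ there are $\binom{k^{2}-k}{\pi(k^{2}-k)}$ candidate images, generically yielding many inequivalent models.

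I would proceed in three steps. First, select $n$ and $\pi$ for which the quadratic in Thm.~\ref{fract} returns an admissible integer $k$ and for which $\pi(k^{2}-k)$ is a proper portion of $k^{2}-k$; the simplest clean instance is $k=3$, $\pi=\tfrac{1}{2}$, so $n=6$, giving image size $3$ out of $6$ available pairs and $\binom{6}{3}=20$ candidate images. Second, exhibit two subsets $T_{1},T_{2}\subseteq C^{2}\setminus\Delta_{C}$ with qualitatively different coordinate-sharing patterns — one in which several pairs share a common first or second coordinate, and one in which every two pairs differ in both coordinates. Third, use the rough distribution index $\iota(R,C)$ from Section~\ref{sf} as a separating invariant: a direct computation yields $\iota$-values involving the fractional summands $1/e$ and $1/\pi$ in the first case and only summands of value $1$ in the second, so the two $\iota$-values are unequal and the models are non-isomorphic even after the harmless relabeling of $R$.

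The main obstacle is conceptual: one must agree on what counts as ``the same model''. If models are distinguished up to mere relabeling of rough objects, then the counting argument $\binom{k^{2}-k}{\pi(k^{2}-k)}>1$ settles non-uniqueness immediately. Under coarser equivalences — in particular chain automorphisms composed with relabeling of $R$ — one must invoke a genuine structural invariant, and $\iota(R,C)$ is the natural choice, since it is invariant under permutations of $R$ but detects the multiset pattern of $\varphi(R)$ through coincidences of lower and upper approximations. Care is required only in bookkeeping the coordinate coincidences so that the separating computation of $\iota$ is correct; the totally ordered nature of $(C,\subseteq)$ rules out hidden chain automorphisms that would otherwise collapse the two witnesses into a single model.
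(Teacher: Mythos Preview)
The paper supplies no argument for this theorem at all: immediately after the statement it passes to the computational plots (Fig.~4 and Fig.~5), which are offered as empirical illustration that many values of $k$ occur, not as a proof. So your proposal is not a variant of the paper's proof but a replacement for a missing one.

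On the substance, your route is sound for the ``non-unique model'' half. Once $n$ and $\pi$ fix $k$, the datum that remains free is exactly the image $\varphi(R)\subseteq C^{2}\setminus\Delta_{C}$ of size $\pi(k^{2}-k)$, and your binomial count already settles non-uniqueness under relabeling of $R$. The refinement via $\iota(R,C)$ is a nice separating invariant, but note that it presupposes $\varphi(x)=(x^{l},x^{u})$, which Section~\ref{sf} explicitly does \emph{not} require in general; you should state this identification as an additional hypothesis of your construction, otherwise $\iota$ need not see $\varphi$ at all. Also be aware of the unfortunate clash of symbols: the $\pi$ in $\nu(a,b)=1/\pi$ is the real constant, while the $\pi$ in Thm.~\ref{fract} is a rational parameter; your write-up handles this correctly, but it deserves a remark.

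There is one genuine gap relative to the theorem as stated: you do not address the ``unique values of $k$'' clause, and in fact the quadratic $n-k=\pi(k^{2}-k)$ has a single positive root for fixed $(n,\pi)$, so $k$ \emph{is} determined as a real number. The paper's own graphical evidence counts values of $k$ while letting the ratio vary below the loose bound $\alpha$, which is a different quantifier order from ``fixed $\pi$''. Your reading --- that non-uniqueness of the pair $(k,\text{model})$ is witnessed entirely on the model side --- is the only way the statement survives; you should make that reading explicit rather than leave the $k$-clause unaddressed.
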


For fixed $n$ and possible values of $\pi$, the number of values of $k$ for which rough objects exist follows the 
pattern described in \textsf{Fig. 4} below. The intended reading is \emph{For $\pi=0.5$ and $n=1000000$, the number of values of $k$ that work seems to be $1413$}.

\begin{figure*}[hbt]
\centering
\includegraphics[width=11.7cm]{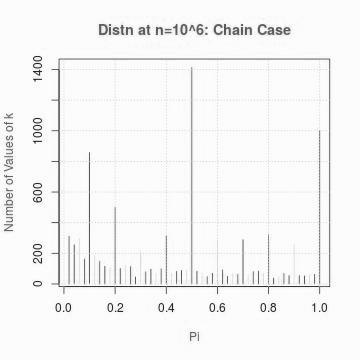}
\caption{Number of Possible Values of $k$}
\end{figure*}

If the bounds on $k$ are imposed on the graph in \textsf{Fig. 4} then \textsf{Fig. 5} is the result:

\begin{figure*}[hbt]
\centering
\includegraphics[width=11.7cm]{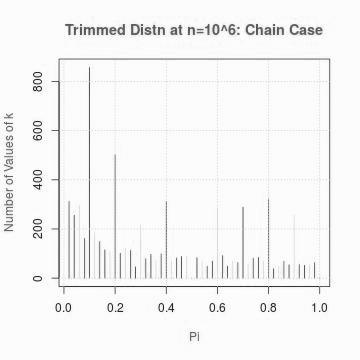}
\caption{Trimmed Number of Possible Values of $k$}
\end{figure*}

\subsection*{Algorithms: Case-2}

An algorithm for computing admissible values of $alpha$ can be
 
 \begin{enumerate}
\item {Fix the value of $n$.} 
\item {Start from possible values of $k$ less than $\sqrt{n - 1}$.}
\item {Compute $\alpha$ for all of these values.}
\item {Suppose the computed values are $\alpha_1, \ldots \alpha_r$}
\item {Check the admissibility of solutions.}
\end{enumerate}

Another algorithm for converging to solutions is the following:
\begin{enumerate}
\item {Start from a sequence $\{\alpha_i\}$ of possible values in the interval $(0,1)$.}
\item {Check the admissibility and closeness to solutions}
\item {If a solution appears to be between $\alpha_i$ and $\alpha_{i+1}$, add an equally spaced subsequence between the two.}
\item {Check the admissibility and closeness to solutions.}
\item {Continue}
\item {Stop when solution is found}
\end{enumerate}

\begin{theorem}
Both of the above algorithms converge in a finite number of steps. 
\end{theorem}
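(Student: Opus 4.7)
The plan is to dispatch each algorithm separately by reducing termination to a finiteness argument resting on Theorem~\ref{fract}.

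For the first algorithm, termination is essentially immediate. By Theorem~\ref{fract}, admissible values of $k$ lie in the finite set $\{2, 3, \ldots, \lfloor\sqrt{n/\alpha}\rfloor\}$ (and certainly in $\{2,\ldots,\lfloor\sqrt{n-1}\rfloor\}$), so the list $\alpha_1,\ldots,\alpha_r$ produced in step~3 has length bounded by $\lfloor\sqrt{n-1}\rfloor$. Each of steps~2--5 amounts to a single pass over this finite list involving only arithmetic operations (a square-root comparison, a rational evaluation of $\pi=(n-k)/(k^2-k)$, and an integrality test in the closed form of Theorem~\ref{fract}). Hence the whole procedure halts in $O(\sqrt{n})$ operations.

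For the second algorithm, the key structural observation is that, with $n$ fixed, the set of values of $\pi$ that correspond to \emph{actual} solutions is a finite discrete subset
\[
\Pi^\ast \;=\; \Bigl\{\tfrac{n-k}{k^2-k} : k \in \{2,\ldots,\lfloor\sqrt{n/\alpha}\rfloor\},\ k \text{ admissible}\Bigr\}\subset (0,1].
\]
Finiteness of $\Pi^\ast$ yields a strictly positive minimum spacing $\delta_n = \min\{|\pi-\pi'| : \pi\neq\pi'\in\Pi^\ast\}>0$. The refinement loop repeatedly inserts an equally spaced subsequence inside the current bracket $[\alpha_i,\alpha_{i+1}]$; after finitely many refinements the spacing drops below $\delta_n$, at which point exactly one candidate $\pi^\ast\in\Pi^\ast$ can lie in the bracket and is hit (or sandwiched to within the tolerance needed to certify it via the integrality test of Theorem~\ref{fract}).

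The main obstacle is giving rigorous content to the informal phrases \emph{closeness to solutions} (step~2) and \emph{a solution appears to be between $\alpha_i$ and $\alpha_{i+1}$} (step~3). To make the argument precise I would fix the admissibility test as follows: given $\pi$, compute
\[
k(\pi) \;=\; \frac{(\pi-1) + \sqrt{(1-\pi)^2+4n\pi}}{2\pi}
\]
from Theorem~\ref{fract}, declare a pair $[\alpha_i,\alpha_{i+1}]$ a \emph{bracket} when $\lfloor k(\alpha_i)\rfloor \neq \lfloor k(\alpha_{i+1})\rfloor$, and declare a \emph{solution found} when $k(\alpha_i)\in\mathbb{Z}$ and falls within the bounds of Theorem~\ref{fract}. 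Under this convention only finitely many brackets can ever arise from an initial finite sequence (at most $|\Pi^\ast|$ of them), each bracket is resolved after at most $O(\log(1/\delta_n))$ refinement rounds since the inserted subsequence shrinks the effective spacing geometrically, and step~6 is triggered in finite time. If the problem admits no solution at all, Algorithm~1 still halts by exhaustion; Algorithm~2 should be equipped with the additional termination clause \emph{stop when no bracket remains}, which again fires after finitely many rounds by the same spacing argument.
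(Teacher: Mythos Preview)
Your treatment of the first algorithm matches the paper's: both dismiss it as immediate from the finiteness of the range of $k$.

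For the second algorithm your argument is correct but genuinely different from the paper's. The paper simply runs a nested-interval construction: starting from $[0,1]$, repeatedly subdivide the current interval into $n$ equal parts, keep the subinterval containing the target $\alpha$, and observe the resulting endpoints satisfy $(\forall\varepsilon>0)(\exists N)(\forall r>N)\,|\alpha-\alpha_{rj}|<\varepsilon$; from this it concludes the algorithm ``will succeed in finding the required $\alpha$.'' That is really a limit-convergence statement rather than a finite-termination one. You instead exploit the fact that for fixed $n$ the admissible fractions $\pi=(n-k)/(k^2-k)$ form a finite discrete set $\Pi^\ast$, extract a positive minimum gap $\delta_n$, and argue the geometric shrinking of the bracket width forces termination in $O(\log(1/\delta_n))$ rounds. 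Your route buys an honest finite-step guarantee with an explicit bound, and it also forces you to make the informal oracle ``a solution appears to be between $\alpha_i$ and $\alpha_{i+1}$'' precise via the monotone map $\pi\mapsto k(\pi)$; the paper's route is shorter but leaves the passage from $\varepsilon$-closeness to actual halting implicit.
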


\begin{proof}
Convergence of the first algorithm is obvious.

Convergence of the second follows from the following construction:
\begin{itemize}
\item {Suppose the goal is to converge to an $\alpha \in (0,1) $.}
\item {Let $\alpha_o =0, \, \alpha_1 = 1 $ and for a fixed positive integer $n$ and $i= 1, \ldots, n$, let $\alpha_{1i} = \frac{i}{n}$ and $\alpha \in (\alpha_{1j}, \alpha_{1 j+1})$. }
\item {Form $n$ number of equally spaced partitions $\{\alpha_{2i}\}$ of $(\alpha_{1j}, \alpha_{1 j+1})$ and let $\alpha \in (\alpha_{2j}, \alpha_{2 j+1})$.}
\item {Clearly $(\forall \epsilon >0 \, \exists N\, \forall r>N)\, |\alpha - \alpha_{rj}| < \epsilon $}
\item {So the algorithm will succeed in finding the required $\alpha$. }
\end{itemize}
\qed 
\end{proof}

\section{Bounded Distribution on Chains}

The idea of bounded distribution corresponds to the set $R$ being partitioned into disjoint subsets of size $\{r_i \}_{i=1}^{g}$ with $g= k^2 -k$ subject to the condition $\beta$\[a \leq r_i \leq b \leq n -k, \text{  with  }a,\, b \text{ being constants.} \]

\begin{theorem}
If the crisp objects form a chain, then the total number of possible models $B$ is \[B \,= \sum_{\alpha\in \pi(r)|\beta}\prod_{i=1}^{k^2 -k} \alpha_i \text{  and  }n_o a^{k^2 - k} \leq B \leq n_o b^{k^2 - k} ,\]
with the summation being over partitions $\alpha = \{\alpha_i \}$ of $r$ subject to the condition $\beta$ and $n_o$ being the number of admissible partitions under the conditions.
\end{theorem}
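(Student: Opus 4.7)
The plan is to decompose the count along partitions of $r = n - k$ into $g = k^2 - k$ parts. Because $C$ is a chain, the image of $\varphi$ sits inside $C^2 \setminus \Delta_C$, which has exactly $g = k^2 - k$ entries; the bounded-distribution hypothesis $\beta$ requires that each fibre $\varphi^{-1}(i)$ contain between $a$ and $b$ rough objects. So any admissible configuration is determined by (i) a composition $\alpha = (\alpha_1, \ldots, \alpha_g)$ of $r$ with $a \leq \alpha_i \leq b$ recording the fibre sizes, together with (ii) the internal data specifying the rough objects that populate each fibre.

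The next step is to argue that for a fixed admissible $\alpha$, the extensions to a full model split as a product over the $g$ fibres, since the fibres of $\varphi$ are disjoint and the constraint $\beta$ is imposed fibre-wise. Each fibre of size $\alpha_i$ contributes $\alpha_i$ internal options, so the number of models compatible with $\alpha$ equals $\prod_{i=1}^{g} \alpha_i$. Summing over all admissible $\alpha \in \pi(r)|\beta$ then yields the identity
\[ B \,=\, \sum_{\alpha \in \pi(r)|\beta} \prod_{i=1}^{k^2 - k} \alpha_i , \]
which is the first claim.

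For the two-sided estimate, observe that the constraint $\beta$ forces $a \leq \alpha_i \leq b$ for every $i$, so for each admissible composition one has the uniform bound $a^{k^2 - k} \leq \prod_{i=1}^{g} \alpha_i \leq b^{k^2 - k}$. Letting $n_o$ denote the number of admissible partitions, summation of these uniform bounds over $\pi(r)|\beta$ immediately produces $n_o a^{k^2-k} \leq B \leq n_o b^{k^2-k}$, closing the statement.

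The main obstacle is step (ii) in the first paragraph: making precise what is being counted inside a single fibre so that the answer is $\alpha_i$, rather than $1$ (if models are identified up to permutation of rough objects with the same $(l,u)$-profile), $\alpha_i!$ (if internal orderings are remembered), or the multinomial $r!/\prod \alpha_i!$ (if rough objects are fully distinguishable across fibres). The form of the claimed identity forces a specific reading — essentially that within each fibre a distinguished representative is chosen and the remaining $\alpha_i - 1$ are interchangeable — so before writing the proof one should state this convention explicitly; once the per-fibre count is locked down as $\alpha_i$, the product decomposition and the bounding argument are routine.
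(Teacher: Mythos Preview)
Your argument follows essentially the same route as the paper's: identify the $k^2-k$ slots on the chain, restrict to partitions of $r$ satisfying $\beta$, assign $\prod_i \alpha_i$ configurations to each admissible partition, sum, and then bound termwise (the paper in fact omits the bound derivation entirely, ending with ``So the result follows''). The ambiguity you flag---why a fibre of size $\alpha_i$ contributes exactly $\alpha_i$ options rather than $1$, $\alpha_i!$, or a multinomial factor---is not resolved in the paper's proof either; it is simply asserted as the per-partition count, so your caution there is well placed but does not separate your proof from the original.
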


\begin{proof}
On a chain of length $k$, $k^2 -k$ spaces can be filled.

The next step is to determine the partitions $\pi(r)$ of $r$ into $k^2 -k$ distinct parts.

The condition $\beta$ eliminates many of these partitions resulting in the admissible set of partitions $\pi(r)|\beta$. 

Each of the partitions $\alpha \in \pi(r)|\beta$ corresponds to $\prod_i \alpha_i $ number of possibilities.

So the result follows.

\qed

\end{proof}

\section{Distribution of Objects: Most General Set-Theoretic Context}

In general, in the context of the framework specified in the Sec.\ref{sf} it can be assumed that 
\begin{itemize}
\item {$\# (\varphi(R)) = t \leq n-k$,}
\item {$t = \beta (k^2 -k)$ and,}
\item {$n - k = \alpha (k^2 - k)$. for some constants $t,\, \beta,\, \alpha$}
\end{itemize}

This can also be used when objects are neither crisp or non crisp. In practice, objects may be neither crisp nor clearly non crisp possibly when:
\begin{itemize}
\item {a consistent method of identifying crisp objects is not used or}
\item {some objects are merely labeled on the basis of poorly defined partials of features or}
\item {a sufficiently rich set of features that can provide for consistent identification.}
\end{itemize}

The following construction provides a way of integrating the order structure on the set of all crisp and non crisp objects into basic computational considerations:

\begin{definition}
The \emph{lower definable scope} $\mathbf{SL}(x)$ of an element $x\in R$ will be the set of maximal elements in $\downarrow (x) \cap C$, that is \[\mathbf{SL}(x) = \max(\downarrow(x) \cap C). \]

The \emph{upper definable scope} $\mathbf{SU}(x)$ of an element $x\in R$ will be the set of minimal elements in $\uparrow (x) \cap C$, that is \[\mathbf{SU}(x) = \min(\uparrow(x) \cap C). \]
\end{definition}

All representations of rough objects can be seen as the result of choice operations \[\psi_x : \mathbf{SL}(x) \times \mathbf{SU}(x) \longmapsto C^2 \setminus \Delta_C . \] Letting $\# (\mathbf{SL} (x)) = c(x)$ and $\# (\mathbf{SU} (x)) = v(x)$ formulas for possible values are obtainable. Finding a simplification without additional assumptions remains an open problem though.

\subsection*{Chain Covers}

Let $C^*$ be the set of crisp objects $C$ with the induced partial order, then by the theorem in Sec \ref{wth},  
The order structure of the poset of crisp objects $C^*$ permits a disjoint chain cover. This permits a strategy for estimating the structure of possible models and counting the number of models. 

\begin{itemize}
\item {Let $\{C_i \,:\, i=1, \ldots h \}$ be a disjoint chain cover of $C^*$. Chains starting from $a$ and ending at $b$ will be denoted by $[[a, b]]$.}
\item {Let $C_1$ be the chain $[[0, 1]]$ from the the smallest(empty) to the largest object.}
\item {If $C_1$ has no branching points, then without loss of generality, it can be assumed that $C_2 = [[c_{2l}, c_{2g}]]$ is another chain with least element $c_{2l}$ and greatest element $c_{2g}$ such that $0 \prec c_{21}$, possibly $c_{2g} \prec 1$ and certainly $c_{2g} < 1$. }
\item {If $c_{2g} < 1$, then the least element of at least two other chains ($[[c_{3l}, \, c_{3g}]]$ and $[[c_{4l}, \, c_{4g}]]$) must cover $c_{2g}$, that is $c_{2g}\prec c_{3l}$ and $c_{2g}\prec c_{4l}$.}
\item {This process can be extended till the whole poset is covered. }
\item {The first step for distributing the rough objects amongst these crisp objects consists in identifying the spaces distributed over maximal chains on the disjoint cover subject to avoiding over counting of parts of chains below branching points.}
\end{itemize}

The above motivates the following combinatorial problem for solving the general problem: 

Let $H = [[c_l,c_g]] $ be a chain of crisp objects with $\#(H) = \alpha$ and let $c_o$ be a branching point on the chain with $\# ([[c_l , c_o]]) = \alpha_o$. Let \[S_C = \{(a, b) \,; \,a, b\in [[c_o, c_g]] \text{ or } c_l < a, b < c_o \}.\] In how many ways can a subset $R_f \subseteq R$ of rough objects be distributed over $S_C$ under $\# (R_f) = \pi$?

\begin{theorem}
If the number of possible ways of distributing $\pi$ rough objects over a chain of $\alpha$ crisp elements is $n(\pi,\alpha )$, then the number of models in the above problem is \[n (\pi,\alpha ) - n (\pi , \alpha_o).\]  
\end{theorem}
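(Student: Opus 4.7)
The plan is to establish the formula by a complement counting argument on the set of distributions attached to the chain $H$. First, I would take as universe the $n(\pi,\alpha)$ ways of distributing the $\pi$ rough objects over the $\alpha$ crisp elements of $H$, which is guaranteed by the hypothesis of the theorem. Second, I would identify the configurations that have to be discarded: those whose image lies entirely within the sub-chain $[[c_l,c_o]]$ of length $\alpha_o$. Applying the same counting function to this sub-chain gives $n(\pi,\alpha_o)$ such configurations. Subtracting, the count of configurations genuinely attached to $H$ — i.e.\ those not belonging to the redundantly shared segment below the branching point — equals $n(\pi,\alpha) - n(\pi,\alpha_o)$, which is the claim.

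The justifying step is to interpret $S_C$ as the arena of these surviving configurations. Because $c_o$ is a branching point of the disjoint chain cover constructed just before the theorem, any placement confined below $c_o$ will be re-encountered when the enumeration proceeds along the other chain(s) emanating from $c_o$. Discarding these shared configurations is exactly what the definition of $S_C$ encodes: a pair is retained either when both components lie in $[[c_o,c_g]]$ or when both lie strictly inside $(c_l,c_o)$, while pairs crossing the branching boundary or pinned at the common endpoint $c_l$ are assigned to the neighbouring chains. Under this reading, distributions over $S_C$ correspond bijectively to the $H$-distributions that are not entirely confined to $[[c_l,c_o]]$, and the identity drops out.

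The main obstacle is verifying that this bijection is genuinely tight, which hinges on treating the counting function $n(\pi,\cdot)$ uniformly on $H$ and on its sub-chain. One has to confirm that the same conventions used to count $n(\pi,\alpha)$ specialise correctly to $n(\pi,\alpha_o)$ on $[[c_l,c_o]]$, and that configurations placing some rough object on a pair straddling $c_o$ or anchored at $c_l$ are indeed absorbed into the neighbouring chains of the cover — so that their exclusion from the $H$-count leaves no cases uncovered globally. Once these definitional points are granted, the theorem becomes a one-line subtraction, and the proof follows immediately from the additivity of the disjoint chain cover together with the complement principle sketched above.
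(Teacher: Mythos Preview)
Your proposal is correct and follows essentially the same subtraction argument as the paper: the paper's own proof is the single line ``the places between crisp objects in $[[c_l,c_o]]$ must be omitted,'' which is exactly your complement-counting step of removing the $n(\pi,\alpha_o)$ configurations attached to the sub-chain below the branching point. Your write-up is considerably more detailed than the paper's --- in particular your discussion of why the bijection with $S_C$ should hold and your caveats about uniform conventions for $n(\pi,\cdot)$ go beyond anything the paper spells out --- but the underlying idea is identical.
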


\begin{proof}
This is because the places between crisp objects in $[[c_l,c_o ]] $ must be omitted. The exact expression of $n(\pi, \alpha)$ has already been described earlier.
\qed 
\end{proof}

Using the above theorem it is possible to evaluate the models starting with splitting of $r$ into atmost $w$ partitions. \emph{Because of this it is not necessary to use principal order filters generated by crisp objects to arrive at direct counts of the number of possible cases and a representation schematics}. 

\section{Interpretation and Directions}

The results proved in this research are relevant from multiple perspectives. In the perspective that does not bother with issues of contamination, the results mean that the number of rough models relative the number of other possible models of computational intelligence is low. This can be disputed as the signature of the model is restricted and categoricity does not hold. 

In the perspective of the contamination problem, the axiomatic approach to granules , the results help in handling inverse problems in particular. From a minimum of information, it can also be deduced
\begin{itemize}
\item {whether a rough model is possible or}
\item {whether a rough model is not possible or}
\item {whether the given data is part of some minimal rough extensions}
\end{itemize}
The last possibility can be solved by keeping fixed the number of rough objects or otherwise. These problems apply for the contaminated approach too. It should be noted that extensions need to make sense in the first place.
The results are also expected to have many applications in probabilist approaches and variants.

All this is despite the paper being among the simplest in the literature on rough sets.

\bibliographystyle{splncs.bst}
\bibliography{biblioam09022016+.bib}
\end{document}